\newtheorem{problem}{\textbf{Problem}}
\newtheorem{proposition}{\textbf{Proposition}}
\newcommand{\eq}[1]{\begin{equation}\label{#1}}
\newcommand{\en}{\end{equation}}
\title{Graph-based classification of multiple observation sets}
\author{\begin{tabular}{cc} Effrosyni Kokiopoulou & Pascal Frossard\thanks{This work has been mostly performed while the first author was with the Signal Processing Laboratory (LTS4) of EPFL. It has been partly supported by the Swiss National Science Foundation, under grant NCCR IM2.}\\
ETHZ & Ecole Polytechnique F\'ed\'erale de Lausanne (EPFL) \\
Seminar for Applied Mathematics & Signal Processing Laboratory - LTS4 \\
CH - 8092 Z\"urich & CH - 1015 Lausanne\\
\tt{effrosyni.kokiopoulou@sam.math.ethz.ch} & \tt{pascal.frossard@epfl.ch} \end{tabular} }
\begin{document}
%
\maketitle
\thispagestyle{empty}

\begin{abstract}
We consider the problem of classification of an object given multiple observations that possibly include different transformations. The possible transformations of the object generally span a low-dimensional manifold in the original signal space. We propose to take advantage of this manifold structure for the effective classification of the object represented by the observation set. In particular, we design a low complexity solution that is able to exploit the properties of the data manifolds with a graph-based algorithm. Hence, we formulate the computation of the unknown
label matrix as a smoothing process on the manifold under the
constraint that all observations represent an object  of one
single class. It results into a discrete optimization problem,
which can be solved by an efficient and low complexity algorithm.
We demonstrate the performance of the proposed graph-based algorithm in the
classification of sets of multiple images. Moreover, we show
its high potential in video-based face recognition, where it outperforms
state-of-the-art solutions that fall short of exploiting the
manifold structure of the face image data sets.
\end{abstract}

\begin{IEEEkeywords}
\noindent Graph-based classification, multiple observations sets, video face recognition, multi-view
object recognition.
\end{IEEEkeywords}

\section{Introduction}
\label{sec:intro}

Recent years have witnessed a dramatic growth of the amount of digital data that is produced by sensors or computers of all sorts. That creates the need for efficient processing and analysis algorithms in order to extract the relevant information contained in these datasets. In particular, it commonly happens that multiple observations of an object are captured at different time instants or under
different geometric transformations. For instance, a moving object
may be observed over a time interval by a surveillance camera (see
Fig. \ref{fig:multobs:video}) or under different viewing angles by
a network of vision sensors (see Fig. \ref{fig:multobs:visionnet}). This typically produces a large
volume of multimedia content that lends itself as a valuable
source of information for effective knowledge discovery and
content analysis. In this context, classification methods should be able to exploit the diversity of
the multiple observations in order to provide increased
classification accuracy \cite{Stauffer.03}.

\begin{figure}[t]
\begin{center}
     \subfigure[Video frames of a moving object]{\label{fig:multobs:video}\includegraphics[width=3in]{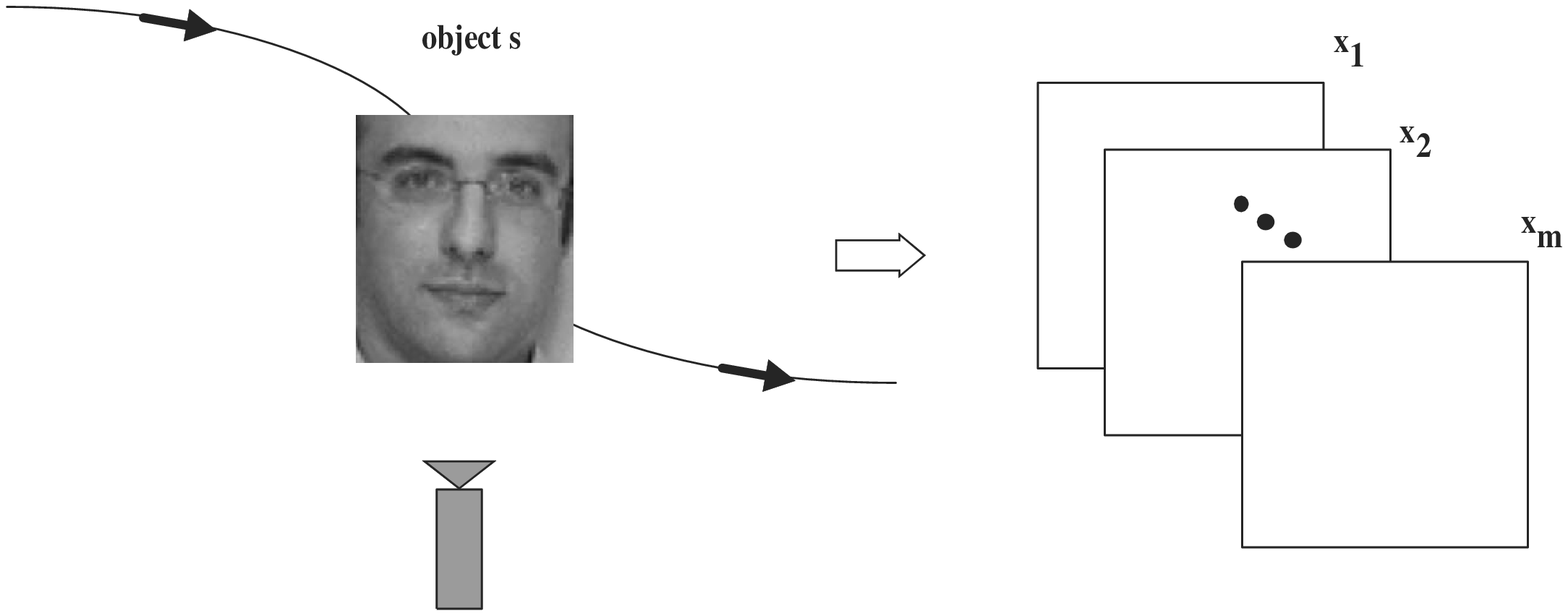}}
     \subfigure[Network of vision sensors]{\label{fig:multobs:visionnet}\includegraphics[width=2in]{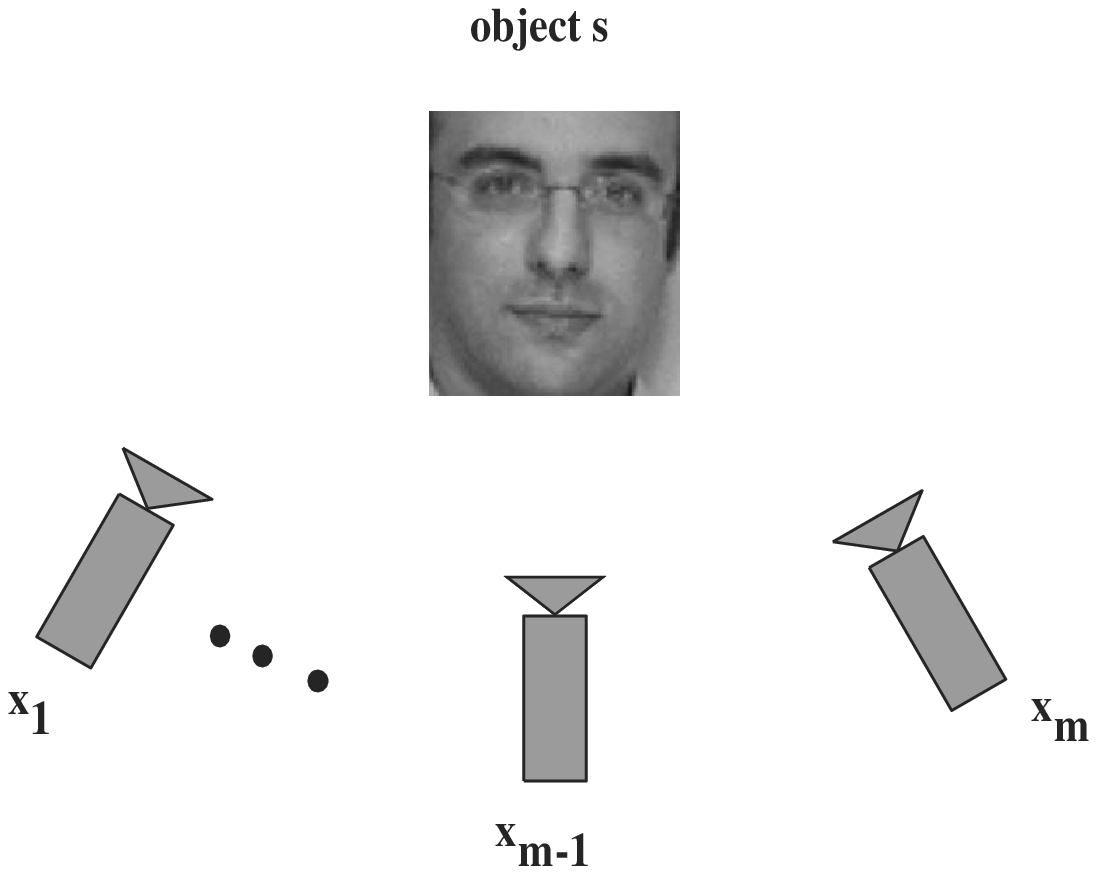}}
\end{center}
\caption{Typical scenarios of producing multiple observations of
an object.} \label{fig:multobs}
\end{figure}

We build on our previous
work \cite{KokPirFro-ICPR.08} and we focus here on the pattern classification problem with
multiple observations. We further assume that observations are
produced from the same object under different transformations, so that they all lie on the same low-dimensional manifold. 
We propose a novel graph-based algorithm built on label
propagation \cite{ZhouBousNLalWestScolk.03}. Label propagation
methods typically assume that the data lie on a low dimensional
manifold living in a high dimensional space. They rely upon the
\emph{smoothness assumption}, which states that if two data
samples $x_1$ and $x_2$ are close, then their labels $y_1$ and
$y_2$ should be close as well. The main idea of these methods is
to build a graph that captures the geometry of this manifold as
well as the proximity of the data samples. The labels of the test
examples are derived by ``propagating" the labels of the labelled
data along the manifold, while making use of the smoothness
property. We exploit the specificities of our particular
classification problem and constrain the unknown labels to
correspond to one single class. This leads to the formulation of a
discrete optimization problem that can be optimally solved by a
simple and low complexity algorithm.

We apply the proposed algorithm to the classification of sets of
multiple images in handwritten digit recognition, multi-view object
recognition or video-based face recognition. In particular, we show 
the high potential of our graph-based method for efficient classification of images that
belong to the same data manifold. For example, the proposed
solution outperforms state-of-the-art subspace or statistical
classification methods in video-based face recognition and object 
recognition from multiple image sets. Hence, this paper establishes 
new connections between graph-based algorithms and the problems of classification of
multiple image sets or video-based face recognition, where the proposed solutions are certainly very promising.

The paper is organized as follows. We first formulate the  problem
of classification of multiple observation sets in Section
\ref{sec:problemdefinition}. We introduce our graph-based algorithm inspired by
label propagation in Section \ref{sec:graphapproach}. Then we demonstrate the
performance of the proposed classification method for handwritten
digit recognition, object recognition and video-based face recognition in Sections
\ref{sec:digits}, \ref{sec:objrec} and \ref{sec:videoface}, respectively.

\section{Problem definition}\label{sec:problemdefinition}

We address the problem of the classification of multiple
observations of the same object, possibly with some
transformations. In particular, the problem is to assign multiple
observations of the test pattern/object $s$ to a single class of
objects. We assume that we have $m$ transformed observations of
$s$ of the following form
\[
x_i = U(\eta_i)s, ~i=1,\ldots,m,
\]
where $U(\eta)$ denotes a (geometric) transformation operator with
parameters $\eta$, which is applied on $s$. For instance, in the
case of visual objects, $U(\eta)$ may correspond to a rotation,
scaling, translation, or perspective projection of the object. We
assume that each observation $x_i$ is obtained by applying a
transformation $\eta_i$ on $s$, which is different from its peers
(i.e., $\eta_i \neq \eta_j$, for $i \neq j$). The problem is to
classify $s$ in one of the $c$ classes under consideration, using
the multiple observations $x_i,~i=1,\ldots,m$.

Assume further that the data set is organized in two parts $X = \{
X^{(l)}, X^{(u)}\}$, where $X^{(l)} =  \{x_1,x_2,\ldots,x_l\}
\subset \mathbb{R}^d$ and $X^{(u)} = \{x_{l+1},\ldots,x_n \}
\subset \mathbb{R}^d$, where $n = l + m$. Let also $\mathcal{L} =
\{1,\ldots,c \}$ denote the label set. The $l$ examples in
$X^{(l)}$ are labelled $\{y_1,y_2,\ldots,y_l \},~y_i \in
\mathcal{L}$, and the $m$ examples in $X^{(u)}$ are unlabelled.
The classification problem can be formally defined as follows.

\vskip 0.1in
\begin{problem}\label{prb:problemdef}
Given a set of labelled data $X^{(l)}$, and a set of unlabelled
data $X^{(u)} \triangleq \{ x_j = U(\eta_j)s, ~j=1,\ldots,m \}$
that correspond to multiple transformed observations of $s$, the
problem is to predict the correct class $c^\ast$ of the original
pattern $s$.
\end{problem}
\vskip 0.1in

One may view Problem \ref{prb:problemdef} as a special case of semi-supervised
learning \cite{SSL-book}, where the unlabelled data $X^{(u)}$ represent the multiple observations with the extra constraint that all
unlabelled data examples belong to the same (unknown) class. The problem then resides in estimating the single unknown class, while generic semi-supervised learning problems attribute the test examples to different classes. 


\section{Graph-based classification} \label{sec:graphapproach}

\subsection{Label propagation} \label{sec:LP}

\begin{figure}[t]
\begin{center}
\includegraphics[width=2.5in]{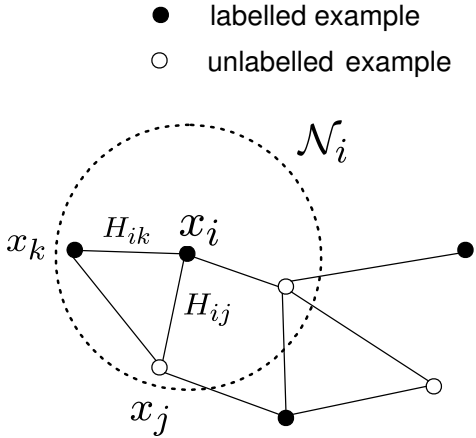}
\end{center}
\caption{Typical structure of the $k$-NN graph. $\mathcal{N}_i$
represents the neighborhood of the sample $x_i$.}
\label{fig:graph}
\end{figure}

We propose in this section a novel method to solve Problem \ref{prb:problemdef}, 
which is inspired by label propagation \cite{ZhouBousNLalWestScolk.03}. The label propagation algorithm is based on a \emph{smoothness assumption}, which states that if $x_1$ and $x_2$ are
close by, then their corresponding labels $y_1$ and $y_2$ should be close as well. Denote by $\mathcal{M}$ the set of matrices with
nonnegative entries, of size $n \times c$. Notice that any matrix $M \in \mathcal{M}$ provides a labelling of the data set by
applying the following rule: $y_i = \max_{j=1,\ldots,c} M_{ij}$. We denote the initial label matrix as $Y \in \mathcal{M}$ where
$Y_{ij} = 1$ if $x_i$ belongs to class $j$ and 0 otherwise. The label propagation algorithm first forms the $k$ nearest neighbor
($k$-NN) graph defined as
\[
\mathcal{G}=(\mathcal{V},\mathcal{E}),
\]
where the vertices $\mathcal{V}$ correspond to the data samples
$X$. An edge $e_{ij} \in \mathcal{E}$ is drawn if and only if
$x_j$ is among the $k$ nearest neighbors of $x_i$. 

It is common practice to assign weights on the edge set of
$\mathcal{G}$. One typical choice is the Gaussian weights
\eq{eq:Gaussian} H_{ij} =
\begin{cases}
\exp(-\frac{\|x_i - x_j\|^2}{2\sigma^2})  & \textrm{when} ~(i,j) \in \mathcal{E}, \\
0                   & \textrm{otherwise}.
\end{cases}
\en

The similarity matrix $S \in R^{n \times n}$ is further defined as
\begin{equation}\label{eq:Smatrix}
S = D^{-1/2} H D^{-1/2},
\end{equation}
where $D$ is a diagonal matrix with entries $D_{ii} = \sum_{j=1}^n
H_{ij}$. See also Fig. \ref{fig:graph} for a schematic
illustration of the $k$-NN graph and related notation.

Next, the algorithm computes a real valued $M^\ast \in
\mathcal{M}$ based on which the final classification is performed
using the rule $y_i = \max_{j=1,\ldots,c} M_{ij}^\ast$. This is
done via a regularization framework with a cost function
defined as
\begin{eqnarray}\label{eq:regularization}
\mathcal{U}(M) &=& \frac{1}{2} \Big( \sum_{i,j=1}^n H_{ij}
\|\frac{1}{\sqrt{D_{ii}}} M_i - \frac{1}{\sqrt{D_{jj}}} M_j \|^2 +
\nonumber \\
&& \mu \sum_{i=1}^n \|M_i-Y_i\|^2 \Big) ,
\end{eqnarray}
where $M_i$ denotes the $i$th row of $M$. The computation
of $M^\ast$ is done by solving the quadratic optimization problem
$M^\ast = \arg \min_{M \in \mathcal{M}} \mathcal{U}(M)$.

Intuitively, we are seeking an $M^\ast$ that is smooth along the
edges of similar pairs $(x_i,x_j)$ and at the same time close to
$Y$ when evaluated on the labelled data $X^{(l)}$. The first term
in (\ref{eq:regularization}) is the \emph{smoothness} term and the
second is the \emph{fitness} term.

Notice that when two examples $x_i$ and $x_j$ are similar (i.e.,
the weight $H_{ij}$ is large) minimizing the smoothness term in
(\ref{eq:regularization}) results in $M$ being smooth across
similar examples. Thus, similar data examples will likely share
the same class label. It can be shown
\cite{ZhouBousNLalWestScolk.03} that the solution to problem
(\ref{eq:regularization}) is given by \eq{eq:LPsolution} M^\ast =
\beta (I-\alpha S)^{-1} \mu Y, \en where $\alpha =
\frac{1}{1+\mu}$ and $\beta = \frac{\mu}{1+\mu}$.

Finally, several other variants of label propagation have been
proposed in the past few years. We mention for instance, the
method of \cite{ZhuGhahramani.02} and the variant of label
propagation that was inspired from the Jacobi iteration algorithm
\cite[Ch. 11]{SSL-book}. Finally, it is interesting to note that
there have also been found connections to Markov random walks
\cite{SzuJaakkola.02} and electric networks \cite{ZhuGhaLaf.03}. Note finally that label propagation is probably the most representative algorithm among the graph-based methods for semi-supervised learning.

\subsection{Label propagation with multiple observations}

We propose now to build on graph-based algorithms to solve the
problem of classification of multiple observation sets. In general,
label propagation assumes that the unlabelled examples come from
different classes. As Problem \ref{prb:problemdef} presents the
specific constraint that all unlabelled data belong to the same
class, label propagation does not fit exactly the definition of
the problem as it falls short of exploiting its special structure.
Therefore, we propose in the sequel a novel graph-based algorithm,
which (i) uses the smoothness criterion on the manifold in order
to predict the unknown class labels and (ii) at the same time, it
is able to exploit the specificities of Problem
\ref{prb:problemdef}.

We represent the data labels with a 1-of-$c$ encoding, which permits to form a binary label matrix of size $n \times c$, whose
$i$th row encodes the class label of the $i$th example. The class
label is basically encoded in the position of the nonzero element.

Suppose now that the correct class for the unlabelled data is the
$p$th one. In this case, we denote by $Z_{p} \in R^{n \times c}$
the corresponding label matrix. Note that there are $c$
such label matrices; one for each class hypothesis. Each \textit{class-conditional label matrix}
$Z_{p}$ has the following form
\begin{equation}\label{eq:Z}
Z_{p} = \left[
\begin{array}{c}
  Y_l \in R^{l \times c}\\
  \hline
  \\
  \textbf{1} e_{p}^\top \in R^{m \times c} \\
\end{array}
\right] \in R^{n \times c},
\end{equation}
where $e_{p} \in R^{c}$ is the $p$th canonical basis vector and
$\textbf{1} \in R^{m}$ is the vector of ones. Fig.
\ref{fig:matrixZ} shows schematically the structure of matrix
$Z_{p}$. The upper part corresponds to the labelled examples and
the lower part to the unlabelled ones. $Z_{p}$ holds the labels of
all data samples, assuming that all unlabelled examples belong to
the $p$th class. Observe that the $Z_{p}$'s share the first part
$Y_l$ and differ only in the second part.

\begin{figure}[t]
\begin{center}
\includegraphics[width=2in]{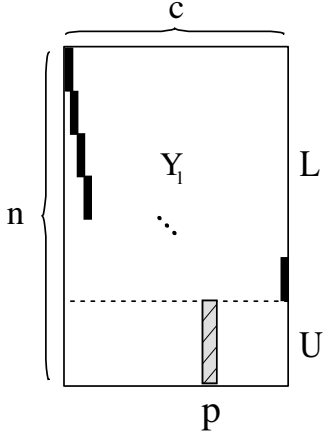}
\end{center}
\caption{Structure of the class-conditional label matrix $Z_{p}$.}
\label{fig:matrixZ}
\end{figure}

Since all unlabelled examples share the same label, the class labels have a special structure that reflects the
special structure of Problem \ref{prb:problemdef}, as outlined in our previous work \cite{KokPirFro-ICPR.08}. We could then express the unknown
label matrix $M$ as,
\begin{equation}\label{eq:M}
M = \sum_{p=1}^c \lambda_{p} Z_{p},~~ Z_{p} \in R^{n \times c},
\end{equation}
where $Z_{p}$ is given in (\ref{eq:Z}), $\lambda_{p} \in \{0,1 \}$
and
\begin{equation}\label{eq:lambda}
\sum_{p=1}^c \lambda_{p} = 1.
\end{equation}
In the above, $\lambda = [\lambda_1,\ldots,\lambda_c]$ is the
vector of linear combination weights, which are discrete and sum
to one. Ideally, $\lambda$ should be sparse with only one nonzero
entry pointing to the correct class.

The classification problem now resides in estimating the proper
value of $\lambda$. We rely on the smoothness assumption and we
propose the following objective function
\begin{eqnarray}\label{eq:regularization2}
\tilde{\mathcal{Q}}(M(\lambda)) = \frac{1}{2} \Big( \sum_{i,j=1}^n
H_{ij} \|\frac{1}{\sqrt{D_{ii}}} M_i - \frac{1}{\sqrt{D_{jj}}} M_j
\|^2 \Big),
\end{eqnarray}
where the optimization variable now becomes the $\lambda$ vector. Notice that the fitting term in Eq. (\ref{eq:regularization}) is not
needed anymore due to the structure of the $Z$ matrices. Furthermore, we observe that the optimization parameter $\lambda$ is implicitly
represented in the above equation through $M$, defined in eq. (\ref{eq:M}). 

In the above, $M_i$ (resp. $M_j$) denotes the $i$th (resp. $j$th)
row of $M$. In the case of normalized similarity matrix, the above criterion
becomes
\begin{eqnarray}\label{eq:objfun}
\mathcal{Q}(M(\lambda)) = \frac{1}{2} \sum_{i,j=1}^n S_{ij} \| M_i -
M_j \|^2,
\end{eqnarray}
where $S$ is defined as in (\ref{eq:Smatrix}). It can be seen that the objective function directly relies on the smoothness assumption. When two examples $x_i$, $x_j$ are nearby (i.e., $H_{ij}$ or $S_{ij}$ is large), minimizing $\tilde{Q}(\lambda)$ and
$Q(\lambda)$ results in class labels that are close too. The following proposition now shows the explicit dependence of $Q$ on $\lambda$.

\begin{proposition}\label{prop:objfun}
Assume the data set is split into $l$ labelled examples $X^{(l)}$
and $m$ unlabelled examples $X^{(u)}$, i.e., $X = [X^{(l)},
X^{(u)}]$. Then, the objective function (\ref{eq:objfun}) can be
written in the following form,
\begin{equation}\label{eq:objfunlambda}
Q(\lambda) = C + \frac{1}{2} \sum_{i \leq l,  j > l} S_{ij} \|Y_i
- \lambda\|^2 + \frac{1}{2} \sum_{i > l,  j \leq l} S_{ij} \|Y_j -
\lambda\|^2
\end{equation}
where $C = \sum_{i \leq l,  j \leq l} S_{ij} \|Y_i - Y_j\|^2$. \\

\end{proposition}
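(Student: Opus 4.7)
The plan is to substitute the parametrization $M=\sum_{p=1}^{c}\lambda_p Z_p$ directly into $Q$ and exploit the block structure of $Z_p$ exhibited in Eq.~(\ref{eq:Z}) to collapse the inner double sum onto just four index regions (labelled/labelled, labelled/unlabelled, unlabelled/labelled, unlabelled/unlabelled).

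First I would compute the row $M_i$ of $M$ in closed form for each of the two regimes. For a labelled index $i\le l$, every $Z_p$ has the same $i$th row, namely $Y_i$; combining this with the constraint $\sum_{p}\lambda_p=1$ from Eq.~(\ref{eq:lambda}) yields $M_i=\sum_{p}\lambda_p Y_i=Y_i$. For an unlabelled index $i>l$, the $i$th row of $Z_p$ is the transposed canonical vector $e_p^{\top}$, so $M_i=\sum_{p}\lambda_p e_p^{\top}=\lambda^{\top}$. These two identities are the essential computational content of the proposition.

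Next I would split the sum $\frac{1}{2}\sum_{i,j=1}^{n} S_{ij}\|M_i-M_j\|^2$ into four pieces according to whether $i$ and $j$ lie in $\{1,\dots,l\}$ or $\{l{+}1,\dots,n\}$. Substituting the two identities above: the labelled--labelled block contributes $\tfrac{1}{2}\sum_{i\le l,j\le l}S_{ij}\|Y_i-Y_j\|^2$, which does not depend on $\lambda$ and can be absorbed into the constant $C$; the unlabelled--unlabelled block vanishes because $M_i-M_j=\lambda^{\top}-\lambda^{\top}=0$; and the mixed blocks give precisely the two $\|Y_i-\lambda\|^2$ and $\|Y_j-\lambda\|^2$ terms stated in Eq.~(\ref{eq:objfunlambda}). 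Matching the resulting expression with the claimed form completes the derivation.

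There is no real obstacle here beyond careful bookkeeping of indices and of row-versus-column orientation; the only place one must be slightly careful is in verifying that the use of $\sum_{p}\lambda_p=1$ to produce $M_i=Y_i$ on the labelled part does not require the discreteness assumption $\lambda_p\in\{0,1\}$, and therefore Proposition~\ref{prop:objfun} is in fact valid for any $\lambda$ satisfying Eq.~(\ref{eq:lambda}). This observation also explains why the fitting term of Eq.~(\ref{eq:regularization}) can be dropped: the parametrization automatically enforces $M_i=Y_i$ on the labelled portion.
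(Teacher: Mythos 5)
Your proposal is correct and follows essentially the same route as the paper's proof: compute $M_i=Y_i$ for $i\le l$ and $M_i=\lambda$ for $i>l$ using the structure of $Z_p$ and $\sum_p\lambda_p=1$, split the double sum into the four labelled/unlabelled blocks, and note that the labelled--labelled block is the constant $C$ while the unlabelled--unlabelled block vanishes. Your added remark that the identity only needs $\sum_p\lambda_p=1$ (not discreteness) is a harmless observation consistent with the paper's argument.
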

\begin{proof}
From equation (\ref{eq:objfun}) observe that
\begin{eqnarray*}
\mathcal{Q}(\lambda) &=& \underbrace{\frac{1}{2} \sum_{i,j \leq
l}^n S_{ij} \| M_i - M_j \|^2}_{Q_1}
+ \underbrace{\frac{1}{2} \sum_{i,j > l}^n S_{ij} \| M_i - M_j \|^2}_{Q_2}   \\
&&+ \underbrace{\frac{1}{2} \sum_{i \leq l, j > l}^n S_{ij} \| M_i
- M_j \|^2}_{Q_3}  \\
&& + \underbrace{\frac{1}{2} \sum_{i > l, j \leq l}^n S_{ij} \|
M_i - M_j \|^2}_{Q_4}.
\end{eqnarray*}

We consider the following cases
\begin{description}
\item[(i)] $i \leq l$ and $j \leq l$: both data examples
$x_i$ and $x_j$ are labelled. Then, $M_i = (\sum_{p=1}^c
\lambda_{p}) Y_i = Y_i$, due to the special structure of the $Z$
matrices (see (\ref{eq:Z})) and also due to the constraint from Eq.
(\ref{eq:lambda}). Similarly, $M_j = Y_j$. This results in $Q_1 =
\frac{1}{2} \sum_{i, j \leq l} S_{ij} \|Y_i - Y_j\|^2 = C$, which
is a constant term and does not depend on $\lambda$.

\item[(ii)] $i > l$ and $j > l$: both data samples $x_i$ and
$x_j$ are unlabelled. In this case, $M_i = \lambda$ and $M_j =
\lambda$, again due to (\ref{eq:Z}). Therefore the second term
$Q_2$ is zero.

\item[(iii)] $i \leq l$ and $j > l$: $x_i$ is labelled and
$x_j$ is unlabelled. In this case, $M_i = Y_i$ and $M_j =
\lambda$. This results in $Q_3 = \frac{1}{2} \sum_{i \leq l,  j >
l} S_{ij} \|Y_i - \lambda\|^2$.

\item[(iv)] $i > l$ and $j \leq l$ is analogous to the case (iii)
above, where the roles of $x_i$ and $x_j$ are switched. Thus, $Q_4
= \frac{1}{2} \sum_{i> l, j \leq l} S_{ij} \|Y_j - \lambda\|^2$.
\end{description}
Putting the above facts together yields Eq. (\ref{eq:objfunlambda}).
\end{proof}

The above proposition suggests that only the interface between
labelled and unlabelled examples matters in determining the
smoothness value of a candidate label matrix $M$, or equivalently the solution vector $\lambda$. We use this observation in order to design an efficient graph-based classification algorithm that is described below.

\subsection{The MASC algorithm}

\begin{algorithm}[tb]
\caption{The MASC algorithm} \label{Algo:MASC}
\begin{algorithmic} [1]
\STATE \textbf{Input}: \\
$X \in \mathbb{R}^{d \times n}$: data examples. \\
$m$: number of observations.\\
$l$: number of labelled data.
\STATE \textbf{Output}: \\
$\hat{p}$: estimated unknown class.
\STATE \textbf{Initialization}: \\
\STATE Form the $k$-NN graph $\mathcal{G=(V,E)}$. \STATE Compute
the weight matrix $H \in \mathbb{R}^{n \times n}$ and the diagonal
matrix $D$, where $D_{i,i} = \sum_{j=1}^n H_{ij}$. \STATE Compute
$S = D^{-1/2} H D^{-1/2}$. \FOR{$p = 1 : c$} \STATE $M = \left[
\begin{array}{c}
  Y_l \\
  \hline
  \textbf{1} e_{p}^\top  \\
\end{array}
\right]$ \STATE $q(p) = \sum_{i \leq l, j > l} S_{ij} \| M_i - M_j
\|^2 + \sum_{i > l,  j \leq l} S_{ij} \|M_i - M_j\|^2$. \ENDFOR
\STATE $\hat{p} = \arg \min_{p} q(p)$
\end{algorithmic}
\end{algorithm}

We propose in this section a simple, yet effective graph-based algorithm for the classification of multiple observations from the same class. Based on Proposition \ref{prop:objfun} and ignoring the constant
term, we need to solve the following optimization problem
\begin{center}
\fbox{\makebox{
\begin{tabular}{l}
Optimization problem: \textbf{OPT} \\
$\min_{\lambda}  \sum_{i \leq l,  j > l} S_{ij} \|Y_i -
\lambda\|^2 +  \sum_{i > l,  j \leq l} S_{ij} \|Y_j -
\lambda\|^2 $ \\
subject to \\
\quad $ \lambda_{p} \in \{0,1 \}$, $p = 1,\ldots,c$, \\
\quad $ \sum_{p = 1}^c \lambda_{p} = 1$. \\
\end{tabular}
}}
\end{center}
Intuitively, we seek the class that corresponds to the smoothest
label assignment between labelled and unlabelled data. Observe
that the above problem is a discrete optimization problem due to
the constraints imposed on $\lambda$, that can be collected in a
set $\Lambda$, where
\[
\Lambda = \{\lambda \in R^{c \times 1}: ~\lambda_{p} \in \{0,1\},
p = 1,\ldots,c, \sum_{p=1}^c \lambda_p = 1 \}.
\]
Interestingly, the search space $\Lambda$ is small. In particular,
it consists of the following $c$ vectors:
\begin{eqnarray*}
&&[1, 0,\ldots,0,\ldots, 0] \\
&&[0, 1,\ldots,0,\ldots, 0] \\
&&\ldots \\
&&[0, 0,\ldots,1,\ldots, 0] \\
&&[0, 0,\ldots,0,\ldots, 1].
\end{eqnarray*}
Thus, one may solve OPT by enumerating all above possible
solutions and pick the one $\lambda^\ast$ that minimizes
$Q(\lambda)$. Then, the position of the nonzero entry in
$\lambda^\ast$ yields the estimated unknown class. We call this
algorithm \textbf{MA}nifold-based \textbf{S}moothing under
\textbf{C}onstraints (MASC) and we show its main steps in
Algorithm \ref{Algo:MASC}. The MASC algorithm has a complexity
that is linear with the number of classes, and quadratic with the
number of samples.The construction of $k$-NN graph (lines 4-6)
scales as O($n^2$). Once the graph has been constructed, the
enumeration of all possible solutions scales as O($c$). We
conclude that the total computational cost is O($n^2 + c$).


\section{Classification of multiple images sets}

\subsection{Handwritten digit classification}
\label{sec:digits}

\begin{figure*}[t]
\begin{center}
\mbox{
     \subfigure[Binary digits]{\label{fig:binary:digits}\includegraphics[width=3in]{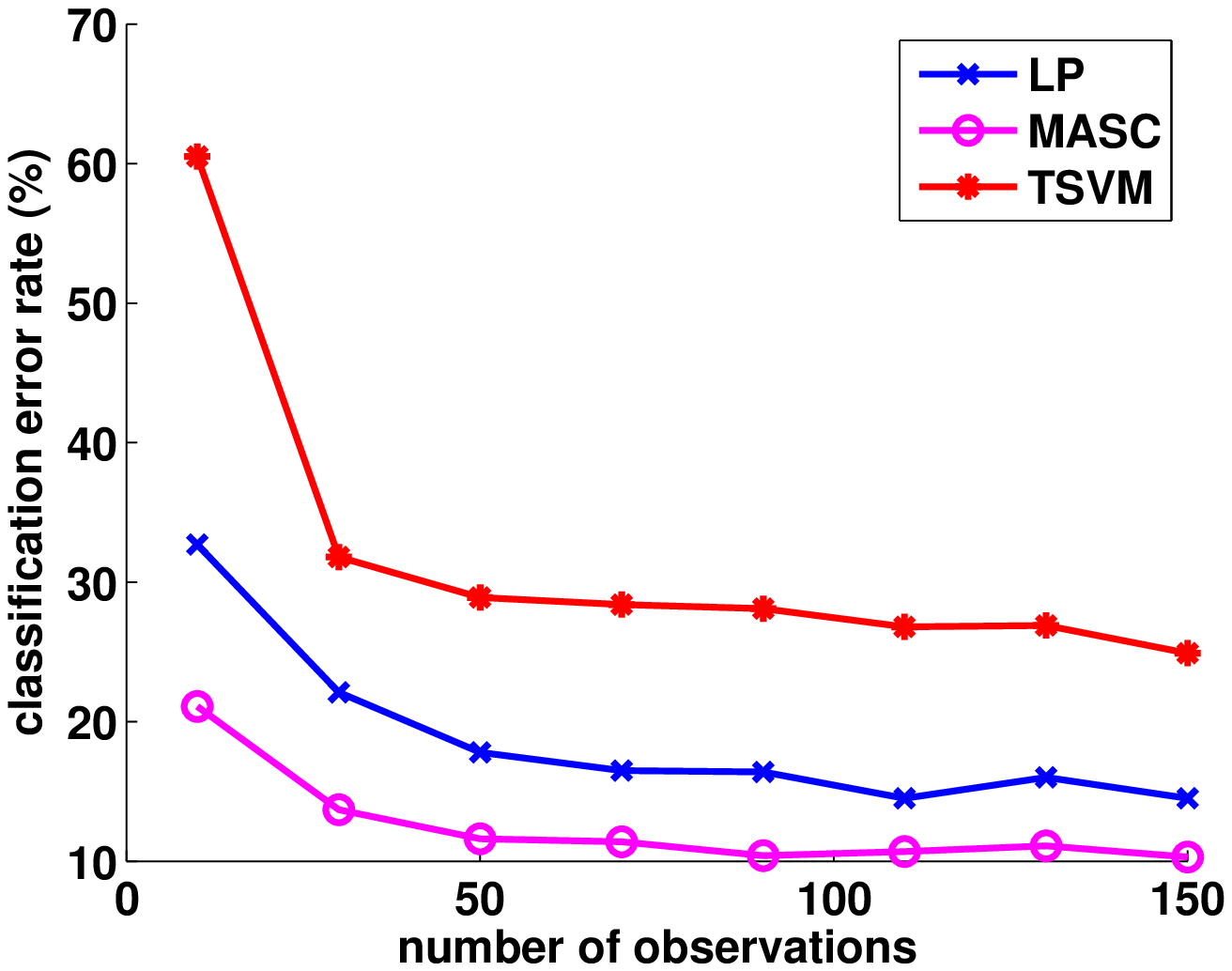}}
     \subfigure[USPS digits]{\label{fig:usps:digits}\includegraphics[width=3in]{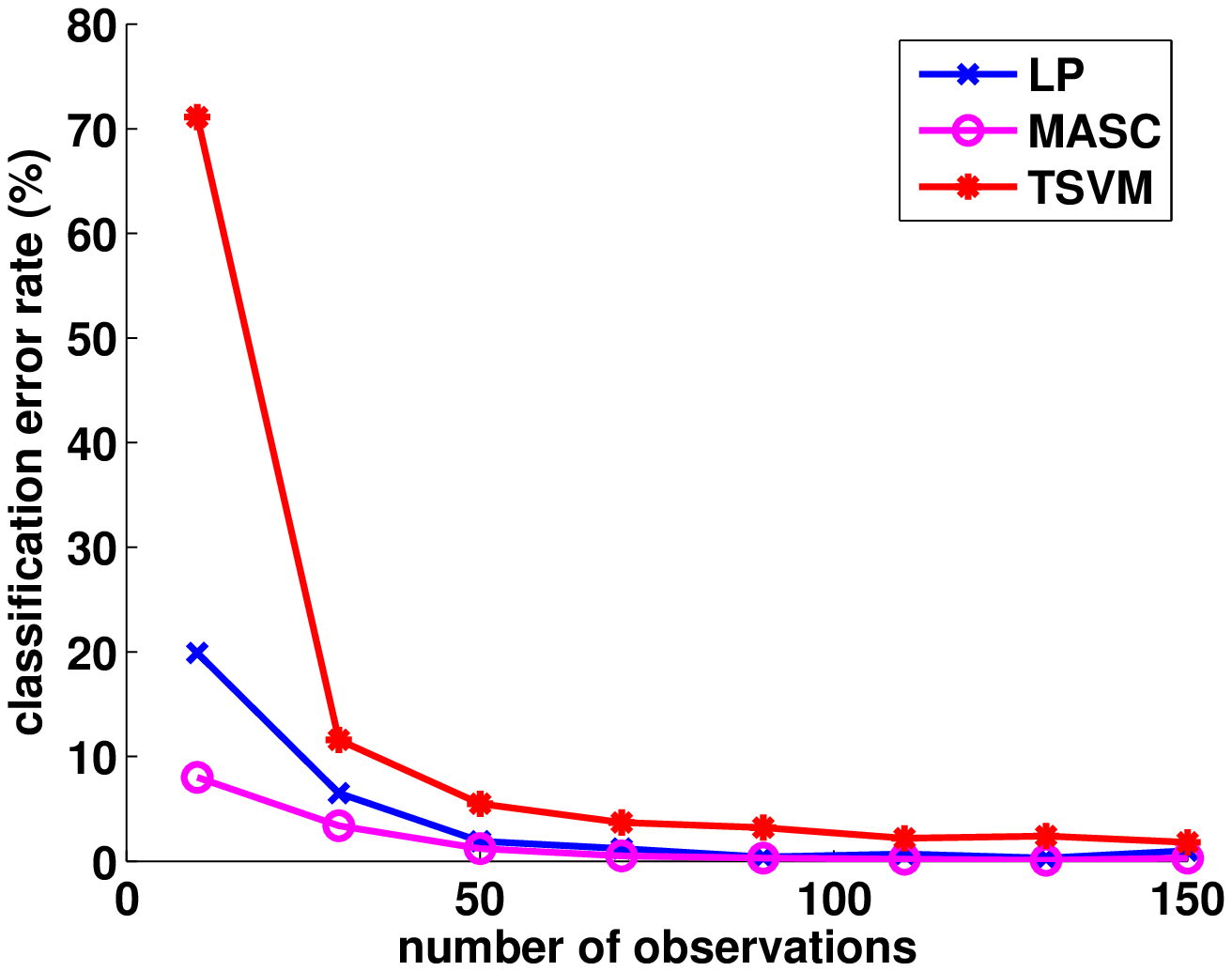}}
     }
\end{center}
\caption{Classification results measured on two different data sets.} \label{fig:digits}
\end{figure*}

We evaluate the performance of the proposed MASC algorithm with
respect to label propagation, in the context of handwritten
digit classification. Multiple transformed images of the same
digit class form a set of observations, which we want to assign in
the correct class. We use two different data sets for our
experimental evaluation; (i) a handwritten digit image
collection\footnote{http://www.cs.toronto.edu/$\sim$roweis/data.html}
and (ii) the USPS handwritten digit image collection. The first
collection contains 20 $\times$ 16 bit binary images of ``0"
through ``9", where each class contains 39 examples. The USPS
collection contains 16 $\times$ 16 grayscale images of digits and
each class contains 1100 examples.

Robustness to pattern transformations is a very
important property of the classification of multiple observations. Transformation invariance can
be reinforced into classification algorithms by augmenting the labelled examples with the so-called \textit{virtual samples},
denoted hereby as $X^{(\textrm{vs})}$ (see \cite{PozdBengio.06} for a similar approach). The virtual samples are essentially data
samples that are generated artificially, by applying transformations to the original data samples. They are given the
class labels of the original examples that they have been generated from, and are treated as labelled data. By including the
virtual samples in the data set, any classification algorithm becomes more robust to transformations of the test examples. We therefore adopt this strategy in the proposed methods and we include $n_{\textrm{vs}}$ virtual samples $X^{(\textrm{vs})}$ in our original data set that is finally written as $X = \{ X^{(l)},
X^{(\textrm{vs})} , X^{(u)} \}$.

We compare the classification performance of the MASC algorithm
with the label propagation (LP) method. In LP, the estimated class
is computed by majority voting on the estimated class labels
computed in Eq. (\ref{eq:LPsolution}). In our experiments, we use the
same $k$-NN graph in combination with the Gaussian weights from Eq.
(\ref{eq:Gaussian}) in both LP and MASC methods. In order to
determine the value of the parameter $\sigma$ in Eq. (\ref{eq:Gaussian}) we adopt
the following process; we pick randomly 1000 examples, compute
their pairwise distances and then set $\sigma$ equal to half of
its median.

We first split the data sets into
training and test sets by including 2 examples per class in the
training set and the remaining are assigned to the test set. Each
training sample is augmented by 4 virtual examples generated by
successive rotations of it, where each rotation angle is sampled
regularly in $[-40^\circ,40^\circ]$. This interval has been chosen
to be sufficiently small in order to avoid the confusion of digits
'6' and '9'. Next, in order to build the unlabelled set
$X^{(u)}$ (i.e., multiple observations) of a certain class, we
choose randomly a sample from the test set of this class and then
we apply a random rotation on it by a random (uniformly sampled)
angle $\theta \in [-40^\circ,40^\circ]$.


The number of nearest neighbors was set to $k=5$ for both binary
digit collection and the USPS data set, in both methods. These
values of $k$ have been obtained by the best performance of LP on
the test set. We try different sizes of the unlabelled set (i.e.,
multiple observations), namely $m = [10:20:150]$ (in MATLAB
notation). For each value of $m$, we report the average
classification error rate across 100 random realizations of
$X^{(u)}$ generated from each one of the 10 classes. Thus, each
point in the plot is an average over 1000 random experiments.


Figures \ref{fig:binary:digits} and \ref{fig:usps:digits} show the
results over the binary digits and the USPS digits image
collections, respectively. Observe first that increasing the
number of observations gradually improves the classification error
rate of both methods. This is expected since more observations of a
certain pattern give more evidence, which in turn results in
higher confidence in the estimated class label. Finally, observe that 
the proposed MASC algorithm unsurprisingly outperforms LP in both data sets, 
since it is designed to exploit the particular structure of 
Problem \ref{prb:problemdef}.

\subsection{Object recognition from multi-view image sets}\label{sec:objrec}

In this section we evaluate our graph-based algorithm in the
context of object recognition from multi-view image sets. In this case, the
different views are considered as multiple observations of
the same object, and the problem is to recognize correctly this object.

The proposed MASC method implements Gaussian weights (\ref{eq:Gaussian}) and sets
$k=5$ in the construction of the $k$-NN graph. 
We compare MASC to well-known methods from the literature, which mostly gather 
algorithms based on either subspace analysis or density estimation (statistical methods):

\begin{itemize}

\item MSM. The Mutual Subspace Method
\cite{FukuiYama.03,YaFuMa.98}, which is the most well known
representative of the subspace analysis methods. It represents
each image set by a subspace spanned by the principal components,
i.e., eigenvectors of the covariance matrix. The comparison of a
test image set with a training one is then achieved by computing
the \textit{principal angles} \cite{GVL-book} between the two
subspaces. In our experiments, the number of principal components
has been set to nine, which has been found to provide the best
performance.

\item KMSM. MSM has been extended to its nonlinear version
called the Kernel Mutual Subspace Method (KMSM) \cite{SakMuk.00}, in order to take into account the nonlinearity of typical image sets. The main difference of KMSM from MSM is that the images are first nonlinearly mapped into a high dimensional feature space,
before modeling by linear subspaces takes place. In other words, KMSM uses
kernel PCA instead of PCA in order to capture the nonlinearities in the data.
In KMSM, we use the Gaussian kernel $k(x,y) = \exp(-\frac{\|x-y\|^2}{2 \sigma^2})$, where $\sigma$
is determined exactly in the same way as in the Gaussian weights of our MASC
method.

\item KLD. The KL-divergence algorithm by Shakhnarovich et al
\cite{ShakFishDarrel.02} is the most popular representative
of density-based statistical methods. It formulates the classification from
multiple images as a statistical hypothesis testing problem.
Under the i.i.d and the Gaussian assumptions on the image sets,
the classification problem typically boils down to a computation of 
the KL divergence between sets, which can be computed in closed form in this case. 
The energy cut-off, which determines the number of principal components 
used in the regularization of the covariance matrices, has been set to 0.96.
\end{itemize}

\begin{figure*}[tbh]
\begin{center}
\mbox{
     \subfigure[ETH-80]{\label{fig:eth80all}\includegraphics[width=3.7in]{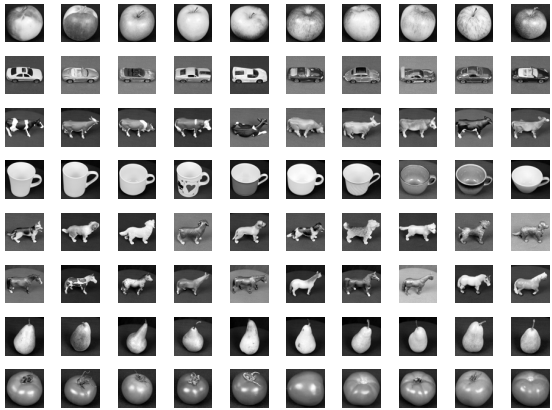}}
     \subfigure[41 views of a sample car model]{\label{fig:eth80:car}\includegraphics[width=3.7in]{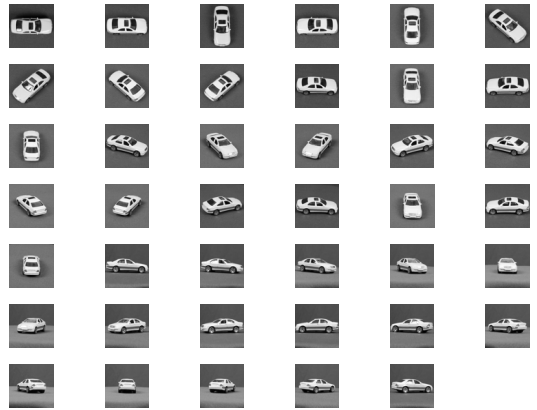}}
     }
\end{center}
\caption{Sample images from the ETH-80 database.} \label{fig:ETH80}
\end{figure*}

In our evaluation, we use the ETH-80 image set \cite{LeibeSchiele.03}, which 
contains 80 object classes from 8 categories; apple, car, cow, cup, dog, horse, 
pear and tomato. Each category has 10 object classes (see Fig. \ref{fig:eth80all}).
Each object class then consists of 41 views of the object spaced evenly over 
the upper viewing hemisphere. Figure \ref{fig:eth80:car} shows the 41 views
from a sample car object class. We use the \texttt{cropped-close128} part of the database. All provided images are of size
128$\times$128 and they are cropped, so that they contain only the object without 
any border area. We downsampled the images to size 32$\times$32 for computational
ease. No further preprocessing is done.

The 41 views from each object class are split randomly into 21 training and 20 test 
samples. In this case, the 20 different views in the test set correspond to the multiple observations 
of the test object. We perform 10 random experiments where the images are randomly split into
training and test sets. Table \ref{tbl:objrecresults} shows the average object recognition rate
for each method. We also report the standard deviation of each method in parentheses. Notice
that the subspace methods are superior to the KLD method which assumes Gaussian distribution
of the data. Notice also that as one would expect, KMSM outperforms MSM that falls short of capturing the
nonlinearities in the data. Finally, observe that our graph-based method clearly outperforms its 
competitors, as it is able to capture not only the nonlinearity but also the manifold structure
of the data.

\begin{table}[t]
\begin{center}
\begin{tabular}{||c||c|c|c||}  \hline
MASC           &     MSM         &   KMSM          &  KLD          \\ \hline
88.88 (1.71)   &    74.88 (5.02) &  83.2500 (3.4)  & 52.5 (3.95)   \\ \hline
\end{tabular}
\end{center}
\caption{Object recognition rate in the mean(std) format, measured on the ETH-80 database.} \label{tbl:objrecresults}
\end{table}


\section{Video-based face recognition}
\label{sec:videoface}


\subsection{Experimental setup}\label{sec:expvideo}

In this section we evaluate our graph-based algorithm in the
context of face recognition from video sequences. In this case, the
different video frames are considered as multiple observations of
the same person, and the problem consists in the correct
classification of this person. We evaluate in this section the
behavior of the MASC algorithm in realistic conditions, i.e.,
under variations in head pose, facial expression and illumination.
Note in passing that our algorithm does not assume any temporal
order between the frames; hence, it is also applicable to the
generic problem of face recognition from image sets.

We use two publically available databases; the VidTIMIT
\cite{VidTIMITdb} and the first subset of the Honda/UCSD
\cite{HondaDB} database.  The VidTIMIT database\footnote{http://users.rsise.anu.edu.au/$\sim$conrad/vidtimit/}
contains 43 individuals and there are three face sequences
obtained from three different sessions per subject. The data set
has been recorded in three sessions, with a mean delay of seven days
between session one and two, and six days between session two and
three. In each video sequence each person performed a head
rotation sequence. In particular, the sequence consists of the
person moving his/her head to the left, right, back to the center,
up, then down and finally return to center.

The Honda/UCSD database\footnote{http://vision.ucsd.edu/~leekc/HondaUCSDVideoDatabase/HondaUCSD.html}
contains 59 sequences of 20 subjects. In contrast to the previous
database, the individuals move their head freely, in different
speed and facial expressions. In each sequence, the subjects
perform free in-plane and out-of-plane head rotations. Each person has between 2
and 5 video sequences and the number of sequences per subject is
variable.

For preprocessing, in both databases, we used first P. Viola's
face detector \cite{Viola-facedet} in order to automatically
extract the facial region from each frame. Note that this
typically results in misaligned facial images. Next, we
downsampled the facial images to size 32$\times$32 for
computational ease. No further preprocessing has been performed,
which brings our experimental setup closer to real testing
conditions.

\subsection{Classification results on VidTIMIT}

We first study the performance of the MASC algorithm with the VidTIMIT database. Figure \ref{fig:VidTIMITfaces} shows a few representative images
from a sample face manifold in the VidTIMIT database. Observe the
presence of large head pose variations. Figure \ref{fig:manifold}
shows the 3D projection of the manifold that is obtained using the
ONPP method \cite{KokSaad.07}, which has been shown to be an
effective tool for data visualization. Notice the four clusters
corresponding to the four different head poses i.e., looking left,
right, up and down. This indicates that a graph-based method
should be able to capture the geometry of the manifold and
propagate class labels based on the manifold structure.

Since there are three sessions, we use the following metric for
evaluating the classification performances
\begin{equation}\label{eq:metric}
\overline{e} = \frac{1}{6} \sum_{i=1}^3 \sum_{j=1, j \neq i}^3
e(i,j),
\end{equation}
where $e(i,j)$ is the classification error rate when the $i$th
session is used as training set and the $j$th session is used as
test set. In other words, $\overline{e}$ is the average classification
error rate calculated over the following six experiments, namely
(1,2), (2,1), (1,3), (3,1), (2,3) and (3,2).

\begin{figure}[t]
\begin{center}
\mbox{
     \subfigure[pose 1]{\label{fig:pose1}\includegraphics[width=0.8in]{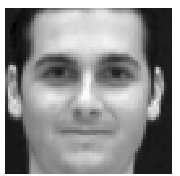}}
     \subfigure[pose 2]{\label{fig:pose2}\includegraphics[width=0.8in]{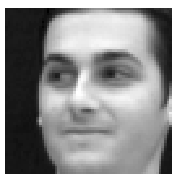}}
     \subfigure[pose 3]{\label{fig:pose3}\includegraphics[width=0.8in]{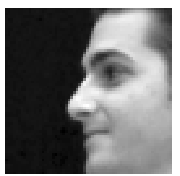}}
     \subfigure[pose 4]{\label{fig:pose4}\includegraphics[width=0.8in]{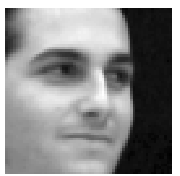}}
     }\\
     \mbox{
     \subfigure[pose 5]{\label{fig:pose5}\includegraphics[width=0.8in]{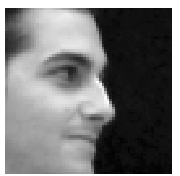}}
     \subfigure[pose 6]{\label{fig:pose6}\includegraphics[width=0.8in]{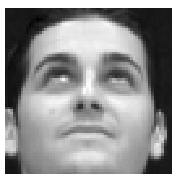}}
     \subfigure[pose 7]{\label{fig:pose7}\includegraphics[width=0.8in]{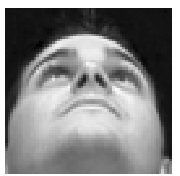}}
     \subfigure[pose 8]{\label{fig:pose8}\includegraphics[width=0.8in]{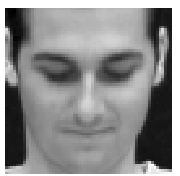}}
     }
\end{center}
\caption{Head pose variations in the VidTIMIT database.}
\label{fig:VidTIMITfaces}
\end{figure}

\begin{figure}[t]
\begin{center}
\includegraphics[width=3in]{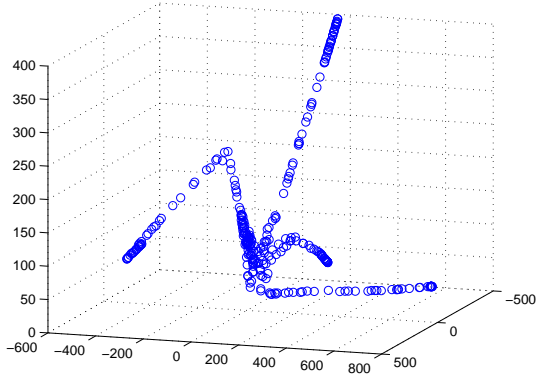}
\end{center}
\caption{A typical face manifold from the VidTIMIT database.
Observe the four clusters corresponding to the four different head
poses (face looking left, right, up and down).}
\label{fig:manifold}
\end{figure}


\begin{table}[tbh]
\begin{center}
\begin{tabular}{||c||c|c|c|c||}  \hline
Recognition rate (\%)   &  MASC    &     MSM      &     KMSM       &  KLD      \\ \hline
$r = 4$           &    96.51       &     91.47    &    95.74       &    84.5       \\
$r = 8$           &    96.51       &     87.21    &    94.19       &    81.4       \\
$r = 12$          &    94.96       &     85.66    &    92.64       &    77.52       \\ 
$r = 16$          &    93.8        &     81.4     &    89.15       &    72.48       \\ \hline
\end{tabular}
\end{center}
\caption{Video face recognition results on the VidTIMIT database.} 
\label{tbl:videofacerecresultsVidtimit}
\end{table}

\begin{figure}[t]
\begin{center}
\includegraphics[width=3in]{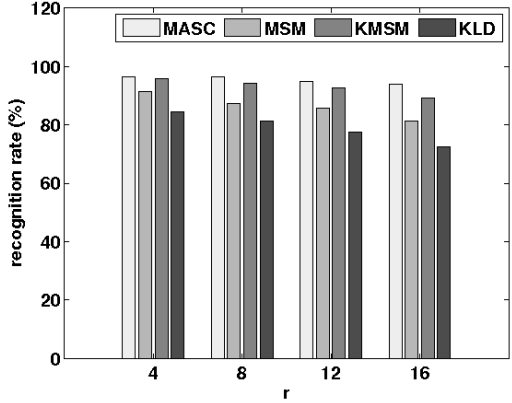}
\end{center}
\caption{Video face recognition results on the VidTIMIT database.}
\label{fig:videoresults}
\end{figure}

We evaluate the video face recognition performance of all methods
for diverse sizes of the training and test sets. The objective is to
assess the robustness of the methods with respect to the size of
the training and test set. For this reason, each image set is re-sampled as
\begin{eqnarray}\nonumber
X_{i,r} &=& X_i(:,1:r:n), ~i=1,\dots,c. 
\end{eqnarray}
In the above, the image set $X_i$ is re-sampled with step $r$, i.e., only one image 
every $r$ images is kept. In our experiments, we use different values of $r$ ranging
from 4 to 16 with step 4. For each value of $r$, we measure the average classification error rate according to the relation (\ref{eq:metric}).

Table \ref{tbl:videofacerecresultsVidtimit} 
shows the recognition performance, for $r$ ranging from 4 to 16 with step 4.
Figure \ref{fig:videoresults} shows graphically the same results. 
Observe that the KLD method that relies on density estimation is
sensitive to the number of the available data. Also, notice that
MSM is superior to KLD, which is expected since KLD relies on the 
imprecise assumption that data follow a Gaussian distribution. 
Furthermore, KMSM, the nonlinear variant of MSM, outperforms the latter
that has trouble in capturing the nonlinear structures in the data. Finally, we
observe that MASC clearly outperforms its competitors in the vast
majority of cases. At the same time, it stays robust to significant 
re-sampling of the data, since its performance remains
almost the same for each value of $r$.

\subsection{Classification results on Honda/UCSD}

\begin{figure}[tbh]
\begin{center}
\mbox{
     \subfigure[pose 1]{\label{fig:hondapose1}\includegraphics[width=0.8in]{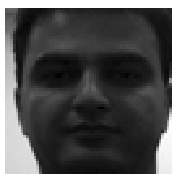}}
     \subfigure[pose 2]{\label{fig:hondapose2}\includegraphics[width=0.8in]{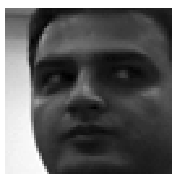}}
     \subfigure[pose 3]{\label{fig:hondapose3}\includegraphics[width=0.8in]{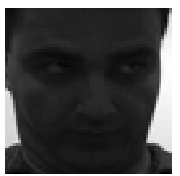}}
     \subfigure[pose 4]{\label{fig:hondapose4}\includegraphics[width=0.8in]{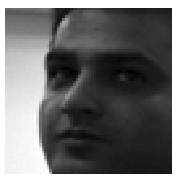}}
     }\\
     \mbox{
     \subfigure[pose 5]{\label{fig:hondapose5}\includegraphics[width=0.8in]{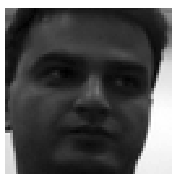}}
     \subfigure[pose 6]{\label{fig:hondapose6}\includegraphics[width=0.8in]{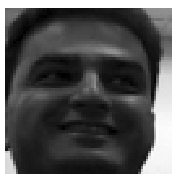}}
     \subfigure[pose 7]{\label{fig:hondapose7}\includegraphics[width=0.8in]{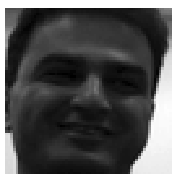}}
     \subfigure[pose 8]{\label{fig:hondapose8}\includegraphics[width=0.8in]{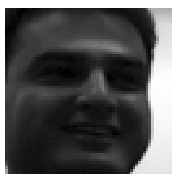}}
     }
\end{center}
\caption{Head pose variations in the Honda/UCSD database.}
\label{fig:Hondafaces}
\end{figure}

\begin{figure}[tbh]
\begin{center}
\includegraphics[width=3in]{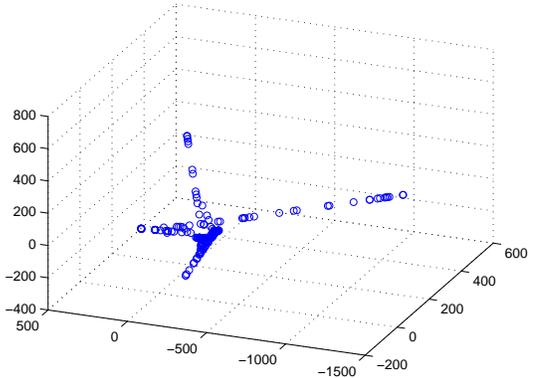}
\end{center}
\caption{A typical face manifold from the Honda/UCSD database.}
\label{fig:hondamanifold}
\end{figure}

\begin{figure}[tbh]
\begin{center}
\includegraphics[width=3.5in]{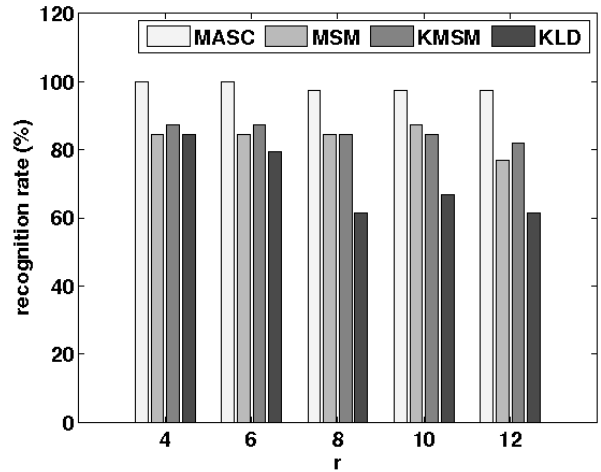}
\end{center}
\caption{Video face recognition results on the Honda/UCSD
database.} \label{fig:hondavideoresults}
\end{figure}

\begin{table}[tbh]
\begin{center}
\begin{tabular}{||c||c|c|c|c||}  \hline
Recognition rate (\%)   &  MASC   &     MSM  &  KMSM     &  KLD        \\ \hline
$r = 4$           &  100    &    84.62       &  87.18  &    84.62    \\
$r = 6$           &  100    &    84.62       &  87.18  &    79.49    \\
$r = 8$           &  97.44  &    84.62       &  84.62  &    61.54    \\
$r = 10$          &  97.44  &    87.18       &  84.62  &    66.67    \\
$r = 12$          &  97.44  &    76.92       &  82.05  &    61.54    \\ \hline
\end{tabular}
\end{center}
\caption{Video face recognition results on the Honda/UCSD
database.} \label{tbl:videofacerecresultsHonda}
\end{table}

We further study the video-based face recognition performance on
the Honda/UCSD database. Figure \ref{fig:Hondafaces} shows a few
representative images from a sample face manifold in the
Honda/UCSD database. Observe the presence of large head pose
variations along with facial expressions. The projection of the
manifold on the 3D space using ONPP shows again clearly the
manifold structure of the data (see Figure
\ref{fig:hondamanifold}), which implies that a graph-based method
is more suitable for such kind of data.

The Honda/UCSD database comes with a default splitting into training and test
sets, which contains 20 training and 39 test video sequences. We
use this default setup and we report the classification
performance of all methods, under different data re-sampling rates. 
Similarly as above, both training and test image sets are re-sampled 
with step $r$, i.e., $X_{i,r} = X_i(:,1:r:n),~i=1,\ldots,c$. Table
\ref{tbl:videofacerecresultsHonda} shows the recognition
rates, when $r$ varies from 4 to 12 with step 2. Figure
\ref{fig:hondavideoresults} shows the same results graphically.
Recall that larger values of $r$ imply sparser image sets. Observe again
that KLD is mostly affected by $r$, by suffering loss in
performance. This is not surprising since it is a density-based
method and densities cannot be accurately estimated (in general) with a few samples. 
MSM seems to be more robust, yielding better
results than KLD, but as expected, it is inferior to KMSM in the majority of cases. 
Finally, MASC is again the best performer and it exhibits very high robustness 
against data re-sampling.

Regarding the relative performance of MASC and KMSM, we should finally
stress out that KMSM is a kernel technique that attempts to capture 
the nonlinear structure of the data by assuming a linear model after applying a nonlinear
mapping of the data into a high dimensional space. Although this methodology stays generic
and presents certain advantages, it is still not clear whether it is capable of capturing the individual (e.g., manifold) structure of diverse data sets. 
On the other hand, the MASC method explicitly relies on a graph model that may fit 
much better the manifold structure of the data. 
Furthermore, it provides a way to cope with the curse of
dimensionality, since the intrinsic dimension of the manifolds is 
typically very small. We believe that graph methods have a great
potential in this field.

\subsection{Video-based face recognition overview}\label{sec:relatedworkfacevideo}

For the sake of completeness, we review briefly in this last
section the state of the art in video-based face recognition.
Typically, one may distinguish between two main families of
methods; those that are based on subspace analysis and those that
are based on density estimation (statistical methods). The most
representative methods for these two families are respectively the
MSM \cite{FukuiYama.03,YaFuMa.98} and KMSM \cite{SakMuk.00} methods and the solution based on
KLD \cite{ShakFishDarrel.02}, which have been used in the
experiments above.

Among the methods based on subspace analysis, we should mention
the extension of principal angles from subspaces, to nonlinear
manifolds. In a recent article \cite{WangShanChenGao.08} it was
proposed to represent the facial manifold by a collection of
linear patches, which are recovered by a non-iterative algorithm
that augments the current patch until the linearity criterion is
violated. This manifold representation allows for defining the
distance between manifolds as integration of distances between
linear patches. For comparing two linear patches, the authors
propose a distance measure that is a mixture between (i) the
principal angles and (ii) exemplar-based distance. However, it is
not clearly justified why such a mixture is needed and what is the
relative benefit over the individual distances. Moreover, their
proposed method requires the computation of both geodesic and
Euclidean distances as well as setting four parameters. On the
contrary, our MASC method needs only one parameter ($k$) to be set
and it requires the computation of the Euclidean distances only.
Note finally that their method achieves comparable results with
MASC on the Honda/UCSD database, but at a higher computational
cost and at the price of tuning four parameters.

Along the same lines, the authors in \cite{KimArandCip.07} propose
a similarity measure between manifolds that is a mixture of
similarity between subspaces and similarity between local linear
patches. Each individual similarity is based on a weighted
combination of principal angles and those weights are learnt by
AdaBoost for improved discriminative performance. In contrast to
the previous paper \cite{WangShanChenGao.08}, the linear patches
are extracted here using mixtures of Probabilistic PCA (PPCA).
PPCA mixture fitting is a highly non-trivial task, which requires
an estimate of the local principal subspace dimension and it also
involves model selection. This step is quite computationally intensive, as noted in \cite{WangShanChenGao.08}.

The main limitation of the statistical methods such as KLD
\cite{ShakFishDarrel.02} is the inadequacy of the Gaussianity
assumption of face images sets; face sequences rather have a
manifold structure. The test video frames are moreover not
independent, so that the i.i.d assumption is unrealistic as well.
The authors in \cite{ArandjSFCD.05} therefore extend the work of
KL divergence by replacing the Gaussian densities by Gaussian
Mixture Models (GMMs), which provides a more flexible method for
density estimation. However, the KL divergence in this case cannot
be computed in a closed form, which makes the authors to resort to
Monte Carlo simulations that are quite computationally intensive.

Finally, there have been a few other methods that cannot be
directly categorized in the above families of methods. The authors
in \cite{ZhouChellappa.06} propose ensemble similarity metrics
that are based on probabilistic distance measures, evaluated in
Reproducing Kernel Hilbert spaces. All computations are performed
under the Gaussianity assumption, which is unfortunately not
realistic for facial manifolds.

In \cite{ZhouChellappa.04}, the authors provide a probabilistic
framework for face recognition from image sets. They model the
identity as a discrete or continuous random variable and they
provide a statistical framework for estimating the identity by
marginalizing over face localization, illumination and head pose.
Illumination-invariant basis vectors are learnt for each
(discretized) pose and the resulting subspace is used for
representing the low dimensional vector that encodes the subject
identity. However, the statistical framework requires the
computation of several integrals that are numerically
approximated. Also, the proposed method assumes that training
images are available for every subject at each possible pose and
illumination, which is hard to satisfy in practice.

X. Liu and T. Chen in \cite{LiuChen.03} proposed a methodology based on adaptive hidden Markov models for video-based face 
recognition. The temporal dynamics of each subject are learnt during
training and subsequently used for recognition. However, the proposed 
approach assumes temporal order of the frames in the face sequence and unfortunately 
it is not applicable to the more generic problem of recognition from
image sets. The study in \cite{HadPiet.04} further investigates how the
performance of the above approach is affected by the face sequence length 
and the image quality.


\section{Conclusions}

In this paper we have addressed the problem of classification of
multiple observations of the same object. We have proposed to
exploit the specific structure of this problem in a graph-based algorithm
inspired by label propagation. The graph-based algorithm
relies on the smoothness assumption of the manifold in order to
learn the unknown label matrix, under the constraint that all
observations correspond to the same class. We have formulated this
process as a discrete optimization problem that can be solved
efficiently by a low complexity algorithm.

We provide experimental results that illustrate the performance of
the proposed solution for the classification of handwritten
digits, for object recognition and for video-based face recognition. In the two latter cases,
the graph-based solution outperforms state-of-the-art methods on
three publically available data sets. This clearly outlines the
potential of the proposed graph-based solution that is able to
advantageously capture the structure of image manifolds.

\bibliographystyle{unsrt}
\bibliography{../BIB/kokiopou}

\end{document}